%File: formatting-instructions-latex-2025.tex
%release 2025.0
\documentclass[letterpaper]{article} % DO NOT CHANGE THIS
\usepackage{aaai25}  % DO NOT CHANGE THIS
\usepackage{times}  % DO NOT CHANGE THIS
\usepackage{helvet}  % DO NOT CHANGE THIS
\usepackage{courier}  % DO NOT CHANGE THIS
\usepackage[hyphens]{url}  % DO NOT CHANGE THIS
\usepackage{graphicx} % DO NOT CHANGE THIS
\urlstyle{rm} % DO NOT CHANGE THIS
  % DO NOT CHANGE THIS
\usepackage{natbib}  % DO NOT CHANGE THIS AND DO NOT ADD ANY OPTIONS TO IT
\usepackage{caption} % DO NOT CHANGE THIS AND DO NOT ADD ANY OPTIONS TO IT
\frenchspacing  % DO NOT CHANGE THIS
\setlength{\pdfpagewidth}{8.5in}  % DO NOT CHANGE THIS
\setlength{\pdfpageheight}{11in}  % DO NOT CHANGE THIS
%
% These are recommended to typeset algorithms but not required. See the subsubsection on algorithms. Remove them if you don't have algorithms in your paper.
\usepackage{algorithm}
\usepackage{algorithmic}

%
% These are are recommended to typeset listings but not required. See the subsubsection on listing. Remove this block if you don't have listings in your paper.
\usepackage{newfloat}
\usepackage{listings}
\DeclareCaptionStyle{ruled}{labelfont=normalfont,labelsep=colon,strut=off} % DO NOT CHANGE THIS
\lstset{%
	basicstyle={\footnotesize\ttfamily},% footnotesize acceptable for monospace
	numbers=left,numberstyle=\footnotesize,xleftmargin=2em,% show line numbers, remove this entire line if you don't want the numbers.
	aboveskip=0pt,belowskip=0pt,%
	showstringspaces=false,tabsize=2,breaklines=true}
\floatstyle{ruled}
\newfloat{listing}{tb}{lst}{}
\floatname{listing}{Listing}
%
% Keep the \pdfinfo as shown here. There's no need
% for you to add the /Title and /Author tags.
\pdfinfo{
/TemplateVersion (2025.1)
}

\usepackage{tabularx}
\usepackage{svg}
\svgpath{{./figures}} % <- using \svgpath to avoid warning

% My packages and settings
\usepackage{xcolor}
\usepackage{subcaption}
\usepackage{amssymb}
\setlength{\parskip}{0pt}
\usepackage{amsmath}
\usepackage{xspace}
\newcommand{\name}{SD-SSM\xspace}
\usepackage{multirow}
\usepackage{amsthm}

\newtheorem{proposition}{Proposition}

\usepackage{booktabs}

\frenchspacing
\setlength{\pdfpagewidth}{8.5in}
\setlength{\pdfpageheight}{11in}

\setcounter{secnumdepth}{1}

% The file aaai25.sty is the style file for AAAI Press
% proceedings, working notes, and technical reports.
%

% Title

% Your title must be in mixed case, not sentence case.
% That means all verbs (including short verbs like be, is, using,and go),
% nouns, adverbs, adjectives should be capitalized, including both words in hyphenated terms, while
% articles, conjunctions, and prepositions are lower case unless they
% directly follow a colon or long dash

\title{On the Expressiveness and Length Generalization of \\ Selective State-Space Models on Regular Languages}
\author{Aleksandar Terzić\textsuperscript{\rm 1, \rm 2}, 
Michael Hersche\textsuperscript{\rm 1}, 
Giacomo Camposampiero\textsuperscript{\rm 1, \rm 2}, \\Thomas Hofmann\textsuperscript{\rm 2}, 
Abu Sebastian\textsuperscript{\rm 1}, 
Abbas Rahimi\textsuperscript{\rm 1}}
\affiliations {
    % Affiliations
    \textsuperscript{\rm 1}IBM Research - Zurich\\
    \textsuperscript{\rm 2}ETH Zürich\\
    \{aleksandar.terzic1, michael.hersche, giacomo.camposampiero1\}@ibm.com, \\
    thomas.hofmann@inf.ethz.ch, \{ase, abr\}@zurich.ibm.com
}

% REMOVE THIS: bibentry
% This is only needed to show inline citations in the guidelines document. You should not need it and can safely delete it.
%\usepackage{bibentry}
% END REMOVE bibentry

% Modify abstract width
\renewenvironment{abstract}
 {\small
  \begin{center}
  \bfseries \abstractname\vspace{-.5em}\vspace{0pt}
  \end{center}
  \list{}{%
    \setlength{\leftmargin}{0mm}% <---------- CHANGE HERE
    \setlength{\rightmargin}{\leftmargin}%
  }%
  \item\relax}
 {\endlist}

\begin{document}

\maketitle

\begin{abstract}
\vspace{2mm} % Adjust the value as needed
\begin{quote}

Selective state-space models (SSMs) are an emerging alternative to the Transformer, offering the unique advantage of parallel training and sequential inference. 
Although these models have shown promising performance on a variety of tasks, their formal expressiveness and length generalization properties remain underexplored.
In this work, we provide insight into the workings of selective SSMs by analyzing their expressiveness and length generalization performance on regular language tasks, i.e., finite-state automaton (FSA) emulation. 
We address certain limitations of modern SSM-based architectures by introducing the Selective Dense State-Space Model (\name), the first selective SSM that exhibits perfect length generalization on a set of various regular language tasks using a single layer.  
It utilizes a dictionary of dense transition matrices, a softmax selection mechanism that creates a convex combination of dictionary matrices at each time step, and a readout consisting of layer normalization followed by a linear map.
We then proceed to evaluate variants of diagonal selective SSMs by considering their empirical performance on commutative and non-commutative automata. 
We explain the experimental results with theoretical considerations.

\end{quote}
\end{abstract}

\begin{links}
     \link{Code}{https://github.com/IBM/selective-dense-state-space-model}
\end{links}

\section{Introduction}

Large language models (LLMs) are most often based on the Transformer architecture~\cite{vaswani_attention_2017}, a neural network that is highly parallelizable across a sequence of tokens. The parallelizability, coupled with hardware-aware implementations of the model \cite{dao2023flashattention2}, have allowed for efficient training over large corpora of long sequences.
However, despite empirical breakthroughs in natural language processing (NLP),
recent theoretical studies demonstrate that the Transformer has limited expressiveness, in particular when faced with state-tracking problems such as deciding the truth value of regular language expressions, i.e., emulating finite-state automata (FSA)~\cite{hahn_theoretical_2020, bhattamishra_ability_2020, merrill_parallelism_2023}. 

On the other hand, nonlinear recurrent neural networks (RNNs) can emulate any FSA; this can be seen by considering explicit mappings of FSA dynamics onto RNN weights, as surveyed in~\cite{svete2023efficient}. 
In practice, RNNs learn to emulate various FSA and often generalize to sequences much longer than those seen in training~\cite{deletang_neural_2023}.
However, in contrast to the Transformer, RNNs cannot be parallelized across the sequence length. 

Recently, a novel family of sequence models has emerged: linear state-space models (SSMs)~\cite{gu_efficiently_2022,gupta_diagonal_2022,smith_simplified_2023,orvieto_resurrecting_2023}.
SSMs provide an alternative sequence processing backbone that can be executed in parallel during training and sequentially during inference.
As a key driver for higher computational efficiency, many (selective) SSMs utilize diagonal rather than dense transition matrices~\cite{gupta_diagonal_2022, gu_parameterization_2022, smith_simplified_2023, orvieto_resurrecting_2023, gu_mamba_2023, de_griffin_2024}, allowing parallel scans for efficient training while remaining effective in many tasks of interest.
The most recent variants based on diagonal selective SSMs outperform the Transformer on several benchmarks, including language modeling~\cite{gu_mamba_2023}.

Although formal limits on the expressiveness of selective SSMs have recently been derived in the literature~\cite{zubic_limits_2024, orvieto_universality_2024, merrill_illusion_2024, cirone_theoretical_2024, sarrof2024expressivecapacitystatespace, grazzi2024unlocking}, the performance of selective SSMs on FSA emulation has not been sufficiently explored.
In this work, we experimentally and analytically study the capabilities of SSMs and selective SSMs to generalize to longer sequences than seen during training on a set of various FSA emulation tasks. 
Our contributions are as follows:

In Sec.~\ref{sec:SDSSM}, we introduce the first selective SSM capable of perfect ($\geq$ 99.9\%) length generalization on FSA emulation using a single layer.
We call this model \name, the \textit{Selective Dense State-Space Model}.
\name utilizes a dictionary of dense unstructured transition matrices, a softmax selection mechanism that creates a convex combination of a fixed number of transition matrices at each step, and finally applies a readout consisting of layer normalization followed by a linear map.
We identify that a common design choice, the presence of a nonlinear readout, prevents \name from achieving full accuracy on a challenging FSA emulation task.
We compare the model with the standard RNN and the LSTM~\cite{hochreiter_long_1997} in terms of the minimal sample length required to generalize in length and find that \name exhibits better length generalization.
Moreover, running \name with a parallel algorithm yields a notable speed-up over its sequential implementation.

In Sec.~\ref{sec:diagonal}, we take a closer look at selective SSMs with diagonal complex transition matrices.
We evaluate them on a set of FSA emulation tasks and analyze the effects of different architectural design choices on their performance on a commutative and non-commutative automaton. We find that perfect in-domain accuracy can be achieved on both automata, but length generalization is significantly worse on the non-commutative one.
We explain our experimental results with such systems by demonstrating that, under an assumption on the mapping of FSA to selective SSMs, single-layer selective diagonal SSMs are restricted to emulating commutative automata.

\section{Background}
\label{sec:background}

In this section we provide an overview of selective SSMs and FSA, and present an exact mapping of any FSA to the weights of a selective SSM.

\subsection{State-Space Models (SSMs) and Selective SSMs}

As their backbone, SSMs implement the standard linear time-invariant system of equations:
% \begin{equation} 
%     x_{t+1} = Ax_t + Bu_t.
% \end{equation}
\begin{align} \label{eq_lssm}
x_{t} &= Ax_{t-1} + Bu_t \\
y_t &= Cx_t + Du_t
\end{align}
With $A \in \mathbb{R}^{n \times n}$, $B \in \mathbb{R}^{n \times d}$, $C \in \mathbb{R}^{d \times n}$ and $D \in \mathbb{R}^{d \times d}$
Since any real $n \times n$ matrix is diagonalizable up to an arbitrarily small perturbation of its entries, the above system can be equivalently represented using complex diagonal transition matrices~\cite{orvieto_resurrecting_2023}. The diagonal form is significantly more efficient to evaluate.
Because the system is linear in the hidden state $x_t$, the sequence $(x_1,...,x_T)$ can be computed using parallel algorithms~\cite{martin_parallelizing_2018, gu_combining_2021}.

Selective SSMs~\cite{gu_mamba_2023} implement the following system of equations:
\begin{align} \label{eq_slssm}
x_{t} &= A(u_t)x_{t-1} + b(u_t) \\
y_{t} &= c(x_{t}) + d(u_t) 
\end{align}
With $A(u_t) \in \mathbb{R}^{n \times n}$, $b:\mathbb{R}^d\rightarrow\mathbb{R}^n$, $c:\mathbb{R}^n\rightarrow\mathbb{R}^d$, and $d:\mathbb{R}^d\rightarrow\mathbb{R}^d$.
In contrast to standard SSMs, selective SSMs generate the matrix $A$ dynamically as a function of the input $u_t$. 
While most SSMs can be diagonalized, selective SSMs can only be diagonalized if all $A(u_t)$ matrices are simultaneously diagonalizable. This is a more restrictive condition than diagonalizability, as it requires that the product $A(u_t)A(u_{t'})$ commutes $\forall t, t' \in \left[1,T\right]$.
This system can also be evaluated in parallel using the parallel scan algorithm~\cite{blelloch_prex_1990,martin_parallelizing_2018,gu_mamba_2023}.

\subsection{Finite-State Automata (FSA)}
\label{subsec:semiauto}

A deterministic finite-state automaton (FSA) is an abstract model of computation defined as a 5-tuple $(Q, \Sigma, \delta, q_{\text{init}}, F)$, where $Q$ is a finite set of states, $\Sigma$ is a finite input alphabet, $\delta : Q \times \Sigma \rightarrow Q$ is the transition function, $q_{\text{init}} \in Q$ is a designated initial state, and $F \subseteq Q$ is the set of accepting states. 
In this work, we are not interested in the set $F$, and $q_{\text{init}}$ is only of limited interest. This leads us to the definition of a semi-automaton, which is a 3-tuple $(Q, \Sigma, \delta)$ with $Q$, $\Sigma$, and $\delta$ defined as above. 

A rich body of work connects semiautomata with algebraic semigroups~\cite{straubing_book, krohn_algebraic_1965, liu_transformers_2023, merrill_illusion_2024}.
A semigroup is a set with an associative binary operation defined on it.
Every semiautomaton induces a transformation semigroup consisting of a set of functions $\rho: Q \rightarrow Q$ defined for each $\sigma \in \Sigma$ by the transition function $\delta(\cdot, \sigma)$. The associative binary operation on this set is function composition. 
See, for example, the \emph{parity} automaton shown in Figure~\ref{fig:parity}.

\begin{figure}[t]
 \centering
 \includegraphics[width=0.8\linewidth]{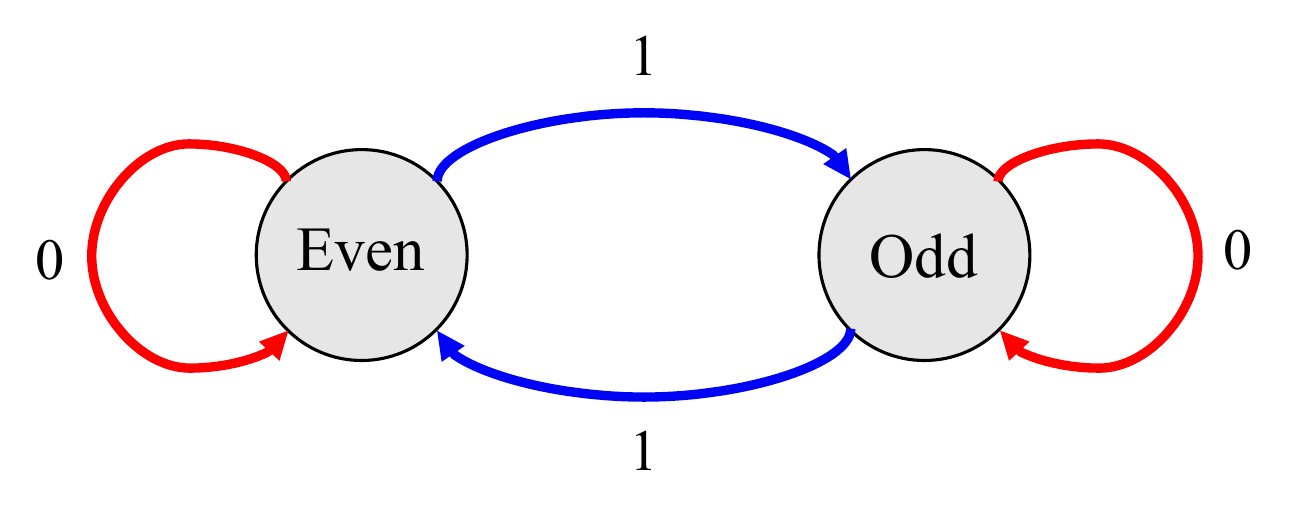}
 \caption{The parity automaton  with Q=\{\emph{Even}, \emph{Odd}\} and $\Sigma=\{0,1\}$. The automaton starts in the \emph{Even} state, toggles on input 1, and makes no transition on input 0.}
 \label{fig:parity}
 \end{figure}

The corresponding transformation semigroup consists of two elements, these being the transition functions corresponding to the two inputs, $\delta(\cdot, 0)$ and $\delta(\cdot, 1)$. 
The transition function $\delta(\cdot, 0)$ corresponds to the identity operation, since no matter in which state $q$ the automaton is in, $\delta(q, 0) = q$. 
If the states are one-hot encoded, $\delta(\cdot, 0)$ is equivalently to the $2\times 2$ identity matrix. 
Meanwhile, $\delta(\cdot, 1)$ can be equivalently represented as a $2\times 2$ matrix with zeros on the diagonal and ones in the off-diagonal entries. This matrix \emph{toggles} the one-hot encoded state. 
Using matrix representation, the function composition can be equivalently represented as matrix multiplication. Therefore, the final state of the automaton can be obtained by evaluating the chain of matrix products corresponding to the given sequence of inputs.
For a deeper discussion of the topic, we recommend~\cite{liu_transformers_2023}.

\subsection{Mapping an FSA to a Selective SSM}

Any FSA can be mapped to a selective SSM.
To see this, we show a mapping procedure that is conceptually equivalent to those of ~\cite{merrill_illusion_2024, liu_transformers_2023}. 
For each $q \in Q$, encode it using $enc: Q \rightarrow\mathbb{R}^{|Q|}$ such that the encodings of different states are orthogonal. Orthogonality is a sufficient, but not a necessary, condition for mapping an FSA to a selective SSM.
Given the encoding of the states, we can now map the transition function $\delta : Q \times \Sigma \rightarrow Q$ to the transition matrices $A(u_t)$ from Eq.~\eqref{eq_slssm}. 
In this mapping, each symbol in the input alphabet $\sigma \in \Sigma$ has an associated transition matrix $A(\sigma)$ defined by the transition function $\delta(\cdot, \sigma): Q \rightarrow Q$.
The mapping between this function and $A(\sigma)$ is defined via the sum $A(\sigma)=\sum_{q \in Q}enc(\delta(q, \sigma))\cdot enc(q)^T$.

We now have all the ingredients needed to map any FSA to Eq.~\eqref{eq_slssm}. 
This is achieved by setting $x_0 = q_{init}$, identifying the inputs $u_t$ as elements of the alphabet $\Sigma$ and thus setting $A(u_t) = A(\sigma)$ as above, and setting $B=0$. By induction, one can see that $x_t = enc(q_t)$ with $q_t$ achieved by $t$-fold repeated application of the transition function $\delta$ onto $q_{init}$ given a sequence of inputs $(\sigma_1 , ..., \sigma_t)$.

\begin{table*}[t]
\centering
\resizebox{\textwidth}{!}{
\begin{tabular}{lcccccccc}
\toprule
Task        & RNN       &         Transformer   &        S4D      & H3    &     Mamba  &      RegularLRNN  & $\mathbb{C}$ Diagonal   &  \name (ours)  \\ \cmidrule(r){1-1} \cmidrule(r){2-2} \cmidrule(r){3-3} \cmidrule(r){4-4} \cmidrule(r){5-5} \cmidrule(r){6-6} \cmidrule(r){7-7} \cmidrule(r){8-8} \cmidrule(r){9-9}
\multicolumn{9}{l}{\cite{deletang_neural_2023}} \\
Parity      & 100\space\space/ 100 & 52.3 / 50.4 & 50.1 / 50.0  &  50.0 / 50.0    & 50.3 / 50.1 & 100\space\space/ 100  &  99.3 / 72.4   & 100\space\space/ 100                \\
Even Pairs  & 100\space\space/ 100 & 100 / 100   & 50.4 / 50.3  &  51.0 / 50.5 &  100\space\space/ 100 & 100\space\space/ 100   & 54.5 / 54.3 & 100\space\space/ 100                  \\
Cycle       & 100\space\space/ 100 & 73.6 / 52.9 & 33.6 / 29.2  & 20.1 / 20.0 & 21.1 / 21.0   & 100\space\space/ 100  & 99.6 / 90.4  &  100\space\space/ 100                    \\
Arithmetic  & 100\space\space/ 100 & 25.5 / 23.5 & 20.1 / 20.0  & 20.1 / 20.0    & 20.1 / 20.1 & 33.3 / 30.2 &  22.1 / 21.7  & 99.9 / 98.5  \\ \cmidrule(r){1-1}
\multicolumn{9}{l}{\cite{liu_transformers_2023}} \\
$C_2 \times C_4$   & 100 / 100 & --- & ---  & ---  & --- & 100\space\space/ 99.4 & 79.2 / 60.4 & 100 / 93.3   \\
$D_4$   & 100 / 100 & --- & ---  & ---  & --- & 100\space\space/ 100  & 32.6 / 29.8 & 99.9 / 99.9   \\
$A_5$   & 100 / 100 & --- & ---  & ---  & --- & 100\space\space/ 100  & 8.3 / 8.2 & 100\space\space/ 100 
\\ \bottomrule       
\end{tabular}
}
\caption{Maximum/average length generalization accuracy (\%) on FSA emulation tasks over three random seeds using single-layer models, except the Transformer, which uses five layers. The Transformer results are taken from \cite{ruoss_randomized_2023} and use randomize RoPE positional encodings. While they evaluate on sequences of length 50 to 500 and we evaluate on length 1 to 500, the failure of the model is still evident. We denote the selective SSM defined by~\cite{fan_advancing_2024} as RegularLRNN. The complex ($\mathbb{C}$) diagonal model is a diagonal selective SSM which we defined in Sec.~\ref{sec:diagonal}. Our \name achieves near-perfect average accuracy on all investigated automata.}
\label{tab:main_table}
\end{table*}

\section{Experimental Analysis of SD-SSM on Regular Languages}
\label{sec:SDSSM}

This section presents our first contribution. 
We start with an empirical study of different sequence models on various FSA emulation tasks. 
The models are evaluated in terms of their length generalization capabilities. 
We then propose a novel selective SSM, SD-SSM, that successfully learns to emulate the dynamics of complex FSAs using a single layer. 

\subsection{Task Description and Experimental Setup}

The investigated tasks involve tracking the state transitions of different FSAs. The experimental code is based on~\cite{deletang_neural_2023} and~\cite{liu_transformers_2023}.
We evaluate our models on seven different FSAs. Parity, Even Pairs, Cycle, and Arithmetic are taken from~\cite{deletang_neural_2023}. 
We further define three automata based on the Cayley diagrams of different algebraic groups. $C_2\times C_4$, the direct product of cyclic groups $C_2$ and $C_4$, is a commutative, solvable group with eight states. $D_4$, the dihedral group with eight elements, is a non-commutative, solvable group. $A_5$ is the group of even permutations of five elements, a non-solvable group with 60 states.
The tasks are described in Appendix~A\footnote{The appendix is available in the preprint~\cite{terzic_sd_ssm_2024}.}.

At each training step, we uniformly sample a sequence length $l$ between 1 and the maximum training length $L$. We then generate a random input sequence $(\sigma_1,\dots,\sigma_l)$ and use it to emulate the automaton.
The models are trained to minimize the cross-entropy loss between their output at the final step and the final state of the emulated automaton.
As in~\cite{deletang_neural_2023}, we train the model for a fixed number of steps and report the test accuracy of the model at the final training step.
The hyperparameters for reproducing the experiments are reported in Appendix~B.

\begin{figure}
\centering
\includegraphics[width=1.0\linewidth]{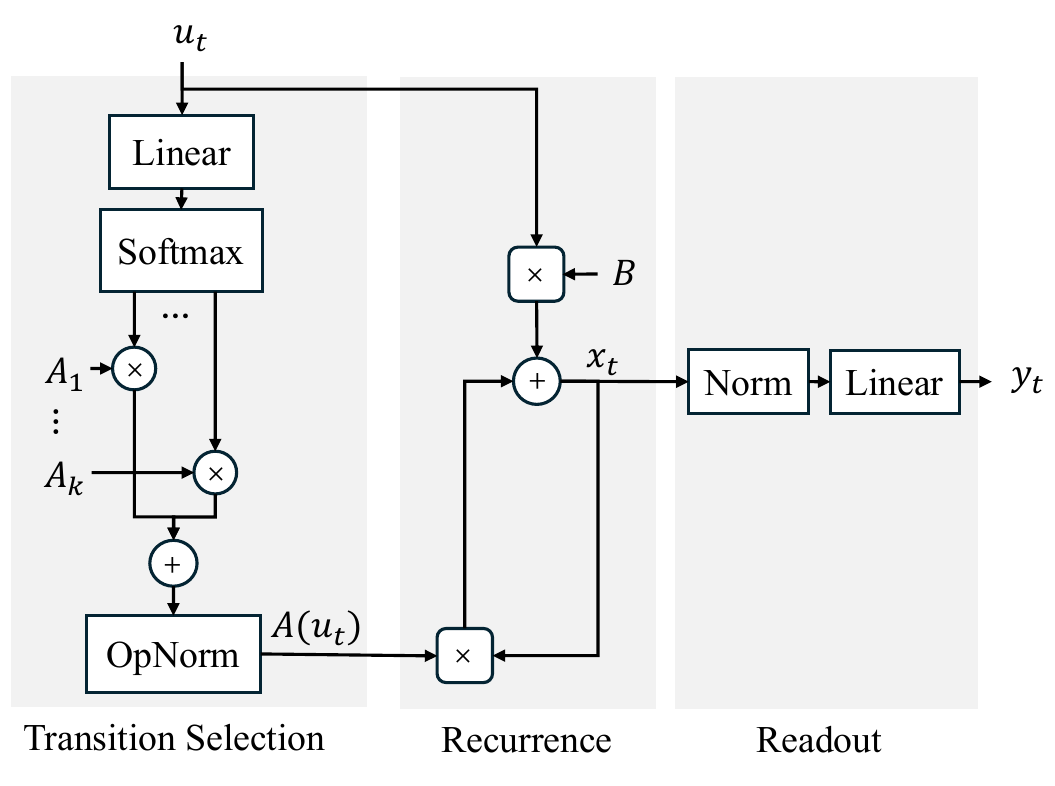}
\caption{The \name model consists of three main steps. Firstly, the dense transition matrices for all time steps are generated using a softmax selection mechanism and operator normalization inspired by~\cite{fan_advancing_2024}. The second step is the linear recurrence from Eq.~\eqref{eq_slssm}. The final step is the readout, consisting of layer normalization followed by a linear map.}
\label{fig:model_sketch}
\end{figure}

\subsection{Modern SSMs Fail to Emulate FSAs}

We start the discussion by evaluating prominent sequence models from the literature on various tasks using the experimental setup described above.
Our evaluation includes the standard nonlinear RNN~\cite{elman_finding_1990}, the Transformer~\cite{vaswani_attention_2017,ruoss_randomized_2023}, S4D~\cite{gu_parameterization_2022}, H3~\cite{fu_hungry_2023}, Mamba~\cite{gu_mamba_2023}, as well as the block-diagonal selective SSM RegularLRNN~\cite{fan_advancing_2024}.
All models are trained on sequences of length up to 40 and are tested on sequences of length 1 up to 500. We train the models using $3$ random seeds and report the maximum/average area under the accuracy vs. length curve. 
Except for the Transformer, we report all results using a single layer.

As shown in Table~\ref{tab:main_table}, none of the aforementioned models achieve perfect length generalization on all of the tasks. 
In fact, the models often fail already on in-domain lengths. 
As a particularly important example consider Mamba, the most recently investigated model and one that has shown the most promise as an alternative to the Transformer as LLM backbone. 
On the Arithmetic task, the best Mamba model achieves an in-domain accuracy of $91.6\%$. 
It exhibits an accuracy of below $99\%$ for the first time with sequences of length 22. The performance then rapidly drops with longer sequences, yielding the low accuracy (20.1\%) reported in Table~\ref{tab:main_table}.
We ran hyperparameter search in which we varied the state size, the learning rate, the weight decay factor, and trained for $10^6$ steps with a batch size of 128. The Mamba model described above is the best out of 96 runs.

Additional results with two layers of S4D, H3, Mamba, as well as S4~\cite{gu_efficiently_2022} and Hyena~\cite{poli_hyena_2023} are reported in Appendix~C. While the models often exhibit better length generalization with two layers than with a single one, none of them achieve significant length generalization on FSA emulation.

\subsection{\name Learns to Emulate FSA}

We now present an architecture that successfully learns to emulate a wide range of different FSA dynamics using a single layer.
We call the model \name, standing for \emph{Selective Dense State-Space Model}. The model architecture is presented in Figure \ref{fig:model_sketch}. 
It can be conceptually separated into three different phases:
\begin{align} 
& A(u_t) = \text{OpNorm}(\sum_{i=1}^k \text{softmax}(Su_t)[i] A_i )\\
& x_{t+1} = A(u_t)x_t + Bu_t \\
& y_{t} = C(\text{LayerNorm}(x_t))
\end{align}
In the first phase, the transition matrices $A(u_t)$ are generated for each input $u_t$ in parallel. This is achieved by passing the input embeddings through a linear layer ($S$) and then processing the resulting vector using the softmax function. 
The outputs of the softmax are used to weigh a dictionary of $k$-many trainable dense transition matrices labeled $A_1,...,A_k$. In our experiments, we use between 5 and 20 transition matrices (see Appendix B).

The weighted matrices are summed together and then modified using an operator normalization procedure (OpNorm).
Operator normalization is required for the stability of the system, preventing the eigenvalues of the transition matrices from growing beyond 1. 
The avenue we pursue is heavily inspired by the concurrent work of~\cite{fan_advancing_2024}, which normalizes the generated $A(u_t)$ matrices before applying the recursion in Eq.~\eqref{eq_slssm}. Their normalization scheme consists of normalizing the columns of $A(u_t)$ by setting each column vector $a_i$ thereof to $a_i / \text{max}(1, l_p(a_i))$, with $l_p(\cdot)$ denoting the standard $l_p$-norm operation. 
We adopt a version of the column-wise normalization scheme in our \name.
Concretely, we divide each column of $A$ by its $l_p$-norm, $a_i \leftarrow a_i / l_p(a_i)$.
While~\cite{fan_advancing_2024} finds that $p=1.2$ works well across all tasks, we varied $p \in [1.0,1.5]$ across the tasks via a hyperparameter search. 

The final two phases consist of executing the recurrence in Eq.~\eqref{eq_slssm}, followed by a readout of the state value. The readout consists of Layer Normalization~\cite{ba_layer_2016} followed by a linear layer. 
We find the design of the readout to be especially important for length generalization. In the experiments that we present in the following subsection, we see that the typically used MLP readout, such as what~\cite{fan_advancing_2024} used, has a negative impact on the generalization properties of our model.

We compare our matrix generation with RegularLRNN, which generates transition matrices as $A(u)=W_2\sigma(W_1u)$. Their transition matrices are block-diagonal, and we assume that the block size equals the square root of the state dimensionality, as was also the case in their experiments. 
The total number of parameters in their transition matrix generator is then $d n  \sqrt n + n^3$, while our generation incurs a cost of $k(d + n^2)$. For a fixed $k$, our generation method is more parameter efficient.

The results on FSA emulation using a one-layer \name are reported in  Table~\ref{tab:main_table}. As shown, on all of the tasks we investigate, the best \name achieves perfect ($\geq$ 99.9\%) accuracy using only one layer. 
We do observe that \name exhibits higher variability on the $C_2 \times C_4$ automaton compared to RegularLRNN. RegularLRNN with one layer does however not perform well on Arithmetic.

\subsection{Nonlinear Readout Hurts State Tracking}
We additionally ablate the \name readout. Apart from the desire for simplicity, the \name's readout also emerged from the observation that a more complex readout has detrimental effects on the length generalization.
On the Arithmetic task, we conducted extensive experiments in which the linear layer in \name's readout was replaced by a standard two-layer MLP with the ReLU non-linearity. The hidden layer of the MLP was configured to consist of 64 units, equal to the state size of the model.
We varied the learning rate in \{2e-5, 1e-4, 5e-4\}, the $p$ parameter in $l_p$ normalization in \{1.1, 1.2, 1.3\}, and we experimented with two regularization techniques, weight decay in \{0, 1e-4, 1e-3\} and dropout in \{0.1, 0.2, 0.5\} on the intermediate activations of the MLP readout.
The best accuracy was achieved by using weight decay. The model achieves an accuracy of $71.9\%$, significantly below $99.9\%$ achieved by the linear readout \name.
Further results are shown in Appendix~C.

\subsection{Length Efficiency Analysis}

We further consider a more demanding experimental setup. 
We compare how well RNN, LSTM, and SD-SSM extrapolate to longer sequences when trained on very short sequences. 
Concretely, the initial state of the network is chosen uniformly at random from the set of automaton states, and the model is trained to predict the automaton state after consuming an input of a short length, up to 8.
As the training sequences are very short, we observe overfitting: the training loss will often notably increase after having plateaued at a low value. Thus, we validate the models on sequences up to length 40 and report the accuracy obtained on sequences up to length 500.
Further details on this experimental setup are outlined in Appendix~B. 
Table~\ref{tab:short-training-sequency-A5} shows favorable results for \name in this challenging task. For very short sequences, it exhibits better length generalization compared to the RNN and the LSTM with the same state size. Further results are provided in Appendix~C.

\begin{table}
\centering
\resizebox{0.45\textwidth}{!}{
\begin{tabular}{cccccc}
\toprule
       & \multicolumn{5}{c}{Training Length}    \\  \cmidrule(r){2-6}
Model  &    4       &    5      &   6        & 7          & 8             \\ \cmidrule(r){1-1} \cmidrule(r){2-6}
RNN    &  7.1     &  26.1   &  85.4    &     99.4     &   99.9      \\ \cmidrule(r){1-1}
LSTM   &    14.7   &  66.9   &  97.6   &  99.9   &  100            \\ \cmidrule(r){1-1}
SD-SSM (ours) &    31.5  &   83.3     &  97.2    &  99.4    &  100        \\ 
\bottomrule          
\end{tabular}
}
\caption{Maximum length generalization accuracy (\%) on sequences up to length 500 over three random seeds. The models were trained to emulate the $A_5$ automaton with very short sequences (4 to 8) using a state size of 128.}
\label{tab:short-training-sequency-A5}
\end{table}

\subsection{\name Can Leverage Parallel Scans}

\name's linear recurrence over the hidden state allows us to use the parallel scan algorithm to compute the sequence of states $x_1,\dots,x_t$. Table~\ref{tab:time-measurements} reports the combined forward and backward pass runtime (in seconds) of a single layer \name on an NVIDIA V100 with different sequence lengths (L), a state size of 64, and a batch size of 16 in PyTorch. We use the implementation of the parallel scan algorithm from~\cite{fan_advancing_2024}. The parallel algorithm allows \name to be trained more efficiently than in sequential mode, despite its use of unstructured matrices.

\begin{table}
\centering
\begin{tabular}{ccccc}
\toprule
& \multicolumn{4}{c}{Sequence Length}    \\  \cmidrule(r){2-5}
Compute Mode    &    64       &     128         &   256       & 512             \\  \cmidrule(r){1-1} \cmidrule(r){2-5}
Recurrent       &  1.82\,s       &     6.40\,s        &  21.93\,s      &    77.27\,s          \\ \cmidrule(r){1-1} 
Parallel        &  1.91\,s       &     4.55\,s        &  11.67\,s      &    26.68\,s             \\
\bottomrule          
\end{tabular}
\caption{Forward + backward pass runtime (seconds) required to evaluate one batch (16) with \name using a sequential or a parallel algorithm. The model uses one layer and a state size of 64.}
\label{tab:time-measurements}
\end{table}

\section{A Limitation of Diagonal Selective SSMs}
\label{sec:diagonal}

To better understand the need for dense transition matrices, we start this section by evaluating diagonal selective SSMs on the previously used set of regular language tasks. We observe that it exhibits significantly lower scores than its dense counterpart, \name.
We then take a closer look at the performance of different architectural variants of a diagonal selective SSM on two selected automata, one being commutative and the other being non-commutative with respect to the inputs.
We find that diagonal selective SSMs tend to perform significantly better on the commutative automaton.
We explain our experimental findings by drawing a connection between the presented FSA$\rightarrow$selective SSM mapping and results from linear system theory. 

\subsection{Diagonal Selective SSMs Fail to Emulate FSAs}

We first present and evaluate a selective SSM that utilizes complex-valued diagonal transition matrices on the full set of FSA tasks, motivated by the prominence of such models in modern literature~\cite{gupta_diagonal_2022,orvieto_resurrecting_2023}.
The models we use in our experiments are variations on the following, general model:
\begin{align} 
& \tilde{A}_{ \operatorname{Re},  \operatorname{Im}}(u_t)= W^o_{ \operatorname{Re},  \operatorname{Im}}(\sigma(W^i_{ \operatorname{Re},  \operatorname{Im}}(u_t))) \\
& \tilde{A}(u_t)[m] = (\tilde{A}_{ \operatorname{Re}}(u_t)[m] + i\tilde{A}_{ \operatorname{Im}}(u_t)[m]) \\
& A(u_t)[m] = \beta \tilde{A}(u_t)[m] / |\tilde{A}(u_t)[m]| \\
& x_{t+1} = A(u_t)\odot x_t + Bu_t \\
& y_{t} = W_o^y(\sigma(W_i^y( \operatorname{Re}(x_t)\oplus  \operatorname{Im}(x_t)))
\end{align}
with $W_{Re,Im}^i \in \mathbb{R}^{n\times d}$, $W_{Re,Im}^o \in \mathbb{R}^{n\times n}$, $W_{i}^y \in \mathbb{R}^{2n\times 2n}$, $W_{o}^y \in \mathbb{R}^{d\times 2n}$,  $B \in \mathbb{C}^{n \times d}$, $\beta \in [0,1]$, $\odot$ denoting the element-wise product, $\sigma(\cdot)$ an element-wise activation function, in our case ReLU, $|\cdot|$ the element-wise absolute value function, and $a \oplus b$ denoting the concatenation of the two vectors $a$ and $b$.

The entries along the diagonal of the presented model's transition matrices are complex numbers whose real and imaginary parts are generated as functions of the input using a two-layer MLP.
They are constrained to be on the complex circle of radius $\beta$, where $\beta$ is a hyperparameter. 

At the readout, we concatenate the real and the imaginary parts of the state vector and then further process this vector using a two-layer MLP. While the nonlinear readout was detrimental in \name, we find that the $\mathbb{C}$ diagonal model exhibits better accuracy with a nonlinear readout instead.

We evaluate the $\mathbb{C}$ Diagonal model on the previously used tasks and report the results in Table~\ref{tab:main_table}. 
On Parity and Cycle, the model achieves almost perfect length generalization on the evaluated sequences, significantly better than Mamba which uses real-valued diagonal transition matrices. However, on Even Pairs, the $\mathbb{C}$ diagonal model achieves perfect accuracy on in-domain lengths, but fails drastically as soon as the sequence length is increased beyond the training domain. On Arithmetic, it fails on in-domain lengths, dropping below $99\%$ already at sequence length 8. 

\begin{figure}[t]
\centering
\begin{subfigure}{.2\textwidth}
  %\centering
  \includegraphics[width=.9\linewidth]{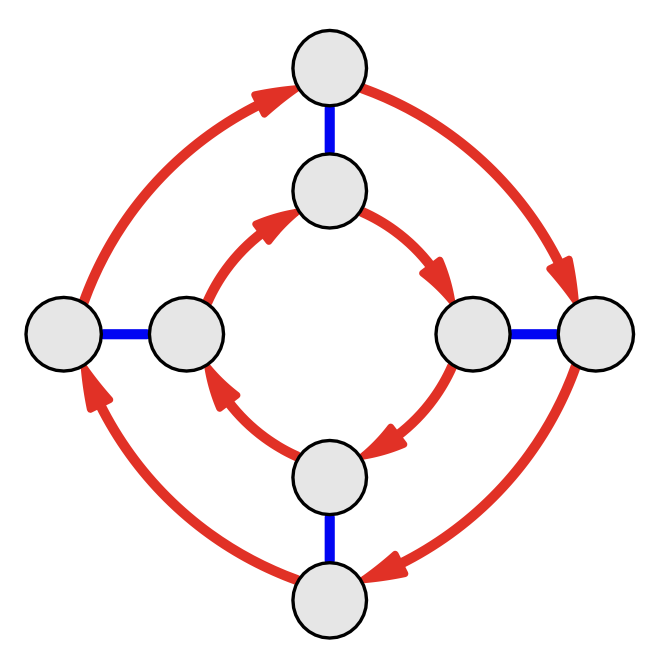}
  \caption{$C_{2\times4}$ Automaton}
  \label{fig:C24_FSA}
\end{subfigure}%
\begin{subfigure}{.2\textwidth}
  \centering
  \includegraphics[width=.9\linewidth]{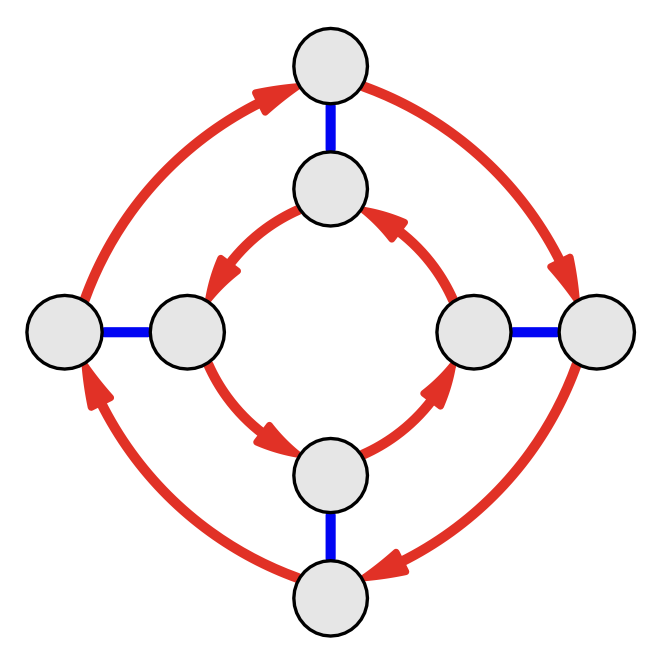}
  \caption{$D_4$ Automaton}
  \label{fig:D4_FSA}
\end{subfigure}
% \caption{Cayley diagrams of the $C_{2}\times C_4$ and the $D_4$ automata, both with two actions: \emph{toggle between cycles} (blue line) and \emph{move on current cycle} (red arrow).  $C_{2} \times C_{4}$ is commutative while $D_4$ is not.}
\caption{Cayley diagrams of $C_{2}\times C_4$ and the $D_4$ automata, both with two actions: \emph{toggle} (blue) and \emph{move} (red)~\cite{carter_visual_2009}. $C_{2} \times C_{4}$ is commutative. Starting at any state, applying \emph{toggle} followed by \emph{move} results in the same state as \emph{move} followed by \emph{toggle}. The same does not hold for $D_4$.}
\label{fig:C24_vs_D4_Cayley}
\end{figure}

\subsection{Performance on Commutative and Non-Commutative Automata}

We investigated in more depth the behavior of diagonal selective SSMs on the $C_{2}\times C_{30}$ and $D_{30}$ automata. Both $C_{2}\times C_{30}$ and $D_{30}$ are solvable groups, but $C_{2}\times C_{30}$ is commutative and $D_{30}$ is not. According to recent theoretical results on the expressive capacity of diagonal selective SSMs~\cite{merrill_illusion_2024, sarrof2024expressivecapacitystatespace}, both can be emulated by diagonal selective SSMs. Smaller versions of the automata, each with 8 instead of 60 states, are shown in Figure~\ref{fig:C24_vs_D4_Cayley}. 
As the automata have long diameters, i.e., the expected number of random actions required to visit each state of the automaton is large, we train the models on these two tasks with sequences up to length 90 and report the average accuracy on sequences up to length 600.

We train four different complex diagonal selective SSM variants on $C_2 \times C_{30}$ and $D_{30}$ FSA emulation.
The two central architectural choices we ablate are the use of the $B$ matrix in the transition as well as the use of a nonlinear readout.
We report the results obtained with the best-performing seed for each model in Table~\ref{tab:C2xC30_vs_D30}. 

\begin{table}[t!]
\resizebox{0.47\textwidth}{!}{
\centering
\begin{tabular}{cccccc}
\toprule
  & \multicolumn{4}{c}{Variants of $\mathbb{C}$ Diagonal} & \name  \\ \cmidrule(r){1-1} \cmidrule(r){2-5} \cmidrule(r){6-6}
$B = 0?$ & \multicolumn{2}{c}{Yes} & \multicolumn{2}{c}{No} &  \multicolumn{1}{c}{No} \\ 
\cmidrule(r){1-1} \cmidrule(r){2-3} \cmidrule(r){4-5} \cmidrule(r){6-6}
Readout            & \multicolumn{1}{c}{Linear}     & \multicolumn{1}{c}{Nonlin.}      & \multicolumn{1}{c}{Linear}       & \multicolumn{1}{c}{Nonlin.}    & \multicolumn{1}{c}{Linear}     \\ \cmidrule(r){1-1} \cmidrule(r){2-2} \cmidrule(r){3-3} \cmidrule(r){4-4}  \cmidrule(r){5-5} \cmidrule(r){6-6} 
   $C_2 \times C_{30}$   & 100        & 87.6         & 65.8        & 81.7   &   100      \\     
   $D_{30}$   & 8.35        & 8.35         & 11.3         & 61.0       &    100           \\    
   \bottomrule
\end{tabular}
}
\caption{Maximum length generalization accuracy (\%) of variants of the $\mathbb{C}$ diagonal selective SSM and our \name.}
\label{tab:C2xC30_vs_D30}

\end{table}

Firstly, we observe that all models learn to emulate $C_2 \times C_{30}$ perfectly on in-domain lengths. Their length generalization is however significantly affected by the architectural choices. Models that do not utilize the $B$ matrix tend to learn solutions that exhibit better length generalization than their counterparts which include the $B$ matrix.

The results are significantly different on $D_{30}$. Without the $B$ matrix, the models completely fail to learn the dynamics of the automaton, exhibiting very low in-domain accuracy.
The model exhibits significantly better length generalization once the $B$ matrix is introduced, although it is only with a nonlinear readout that the model learns to emulate the automaton even on in-domain lengths. The best length generalization accuracy on $D_{30}$ is significantly lower than what could be achieved on the $C_2 \times C_{30}$ automaton.
Further results with more layers of the $C$ diagonal SSM with nonlinear readout and $B\neq0$ are provided in Appendix~C. Introducing more layers did not improve the model's length generalization.
In contrast, \name achieves perfect length generalization on both tasks.

\subsection{Theoretical Characterization of Diagonal Selective SSMs}

Various recent works have derived different bounds on the computational capacity of diagonal selective SSMs~\cite{merrill_illusion_2024, sarrof2024expressivecapacitystatespace}.
However, an explanation for the behavior on commutative vs. non-commutative FSAs, as shown in Table~\ref{tab:C2xC30_vs_D30}, is missing.
We present an analysis of systems with diagonal transition matrices using a restrictive assumption. The assumption is that models implement a mapping consistent with the one described in Sec.~\ref{sec:background}, for which the $B$ matrix is irrelevant.
Single-layer diagonal selective SSMs that do not utilize the $B$ matrix are restricted to commutative automata:

\begin{proposition}
    Given a sequence of inputs $(u_1,...,u_T)$, let the transition matrices $(A(u_1),...,A(u_T))$ be simultaneously diagonalizable. Under the described mapping of FSA to a single-layer selective SSMs which sets $x_0=enc(q_{\text{init}})$, $b(u_t)=0$, and whose transition matrices are simultaneously diagonalizable, the selective SSM can only emulate commutative automata.
\end{proposition}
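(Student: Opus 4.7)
The plan is to reduce the claim to a standard linear-algebra fact combined with the faithfulness of the FSA-to-SSM mapping described in Section 2. The first step is to observe that if $A(u_1),\ldots,A(u_T)$ are simultaneously diagonalizable by a common $P$, so that $A(u_t) = P D_t P^{-1}$ with each $D_t$ diagonal, then $A(u_t) A(u_{t'}) = P D_t D_{t'} P^{-1} = P D_{t'} D_t P^{-1} = A(u_{t'}) A(u_t)$, so the transition matrices commute pairwise. Because the SSM must emulate the automaton on arbitrary sequences drawn from a finite alphabet, this reduces to requiring that the full family $\{A(\sigma) : \sigma \in \Sigma\}$ be simultaneously diagonalizable and hence pairwise commuting.

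The second step is to transport matrix commutativity back to the automaton through the Section 2 mapping. Orthogonal encodings give $A(\sigma)\,enc(q) = enc(\delta(q,\sigma))$ for every $q$ and $\sigma$; iterating and writing down the composition for two symbols $\sigma,\sigma'\in\Sigma$ and an arbitrary state $q\in Q$ yields
\[ A(\sigma')A(\sigma)\,enc(q) = enc\bigl(\delta(\delta(q,\sigma),\sigma')\bigr), \]
together with the analogous identity after swapping $\sigma$ and $\sigma'$. Since $b(u_t)=0$ and $x_0 = enc(q_{\text{init}})$, the SSM state at time $T$ is literally the ordered matrix product applied to $enc(q_{\text{init}})$, so faithful emulation requires both displayed right-hand sides to encode the same state. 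Combining the commutativity of the matrices with the injectivity of $enc$ then forces $\delta(\delta(q,\sigma),\sigma') = \delta(\delta(q,\sigma'),\sigma)$ for all $q,\sigma,\sigma'$, which is precisely the definition of a commutative transformation semigroup, i.e., a commutative automaton in the sense used in the paper.

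The point I expect to require the most care is bridging the gap between commutativity of the matrices (a statement about their action on all of $\mathbb{R}^{|Q|}$) and commutativity of the transition functions on the whole state set $Q$, rather than only on the states reachable from $q_{\text{init}}$. This is handled by the explicit form of the mapping: because $A(\sigma) = \sum_{q\in Q} enc(\delta(q,\sigma))\,enc(q)^T$ is defined by its action on every basis vector $enc(q)$, matrix commutativity is an assertion about the entire state set, not merely the forward orbit of $q_{\text{init}}$. Assembling the three steps shows that any automaton emulated by a single-layer selective SSM under the stated assumptions has a commutative transformation semigroup.
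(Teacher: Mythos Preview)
Your proposal is correct and follows essentially the same route as the paper: both hinge on the observation that simultaneously diagonalizable matrices commute, so the product $A(u_T)\cdots A(u_1)$ applied to $enc(q_{\text{init}})$ is invariant under reordering of the inputs. The paper phrases this via an explicit change of basis to the diagonal system $\tilde{x}_{t+1} = \Lambda(u_t)\tilde{x}_t$ and then simply asserts order-invariance of the final state; you instead invoke the commutativity fact directly and then take the additional care of pulling it back through the encoding to conclude $\delta(\delta(q,\sigma),\sigma') = \delta(\delta(q,\sigma'),\sigma)$ on all of $Q$, which the paper leaves implicit. Your treatment of the reachable-versus-all-states issue is a genuine refinement over the paper's more informal argument, but the underlying idea is the same.
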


\begin{proof}
If the selective SSM is parametrized according to the mapping shown in Sec.~\ref{sec:background}, then it is equivalent to the following system:
\begin{equation}
\label{eq_slssm_no_input}
    x_{t+1} = A(u_t)x_t
\end{equation}
We assumed that the matrices $A(u_t)$ are simultaneously diagonalizable. 
This means that there exist a single invertible matrix $W\in\mathbb{C}^{n \times n}$ such that all transition matrices can be expressed as $A(u_t) = W\Lambda(u_t)W^{-1}$, with the diagonal matrix $\Lambda(u_t) \in\mathbb{C}^{n \times n}$. 
If we insert the above decomposition into the reduced system $x_{t+1} = A(u_t)x_t$, we obtain the form $x_{t+1} = W\Lambda(u_t)W^{-1}x_t$. 
The dynamics of this system are unchanged if we change the representation basis by multiplying the system from the left with $W^{-1}$.
By setting $\tilde{x}_t = W^{-1}x_t$, we see that the above system is equivalent to the diagonal system $\tilde{x}_{t+1} = \Lambda(u_t)\tilde{x}_t$. 
Therefore, if we interpret Eq.~\eqref{eq_slssm_no_input} as implementing the dynamics of an FSA, if this system admits an equivalent diagonal representation then the final automaton state is invariant to the order in which the inputs are presented.

\end{proof}

The mapping we impose is reminiscent of several other mappings from literature~\cite{merrill_illusion_2024,liu_transformers_2023}. 
In fact, if a model based on Eq.~\eqref{eq_slssm} implements a mapping different from the one we describe in Sec.~\ref{sec:background}, then it is not necessarily commutative. This can be seen by unrolling Eq.~\eqref{eq_lssm} for several time steps.
 \begin{align*}
 x_1 &= A_1x_0 + b(u_1) \\
 x_2 &= A_2A_1x_0 + A_2b(u_1) + b(u_2) \\
 x_3 &= A_3A_2A_1x_0 + A_3A_2b(u_1) + A_3b(u_2) + b(u_3)
 \end{align*}
 with the abbreviation $A(u_t) =: A_t$. 
Since the product of diagonal matrices commutes, it is exactly the terms containing $b(u_t)$ that break the commutativity.
However, while the model that utilizes the $B$ matrix learns to emulate the non-commutative automaton in our experiments, it only generalizes to a limited degree.

\section{Related Work}

\subsection{State-Space Models}

Early SSMs build on the HiPPO theory of optimal projections~\cite{gu_hippo_2020}.
The S4 model~\cite{gu_efficiently_2022} is an early example of an SSM used in a deep neural network, and it significantly advanced the state-of-the-art on a collection of long-range modeling tasks compared to the Transformer.
Diagonal SSMs emerged from a desire for more efficient parallelizable computation in the form of DSS~\cite{gupta_diagonal_2022} and S4D~\cite{gu_parameterization_2022}. 
S5 introduces effective simplified MIMO SSMs~\cite{smith_simplified_2023}.
The LRU~\cite{orvieto_resurrecting_2023} is a simplified and effective linear SSM utilizing complex-valued transition matrices.
The H3~\cite{fu_hungry_2023} presents advancements towards realistic language modeling using SSMs, but shows that such models perform best when interleaved with attention layers.
Mamba~\cite{gu_mamba_2023} is the first selective SSM to outperform the Transformer~\cite{vaswani_attention_2017} in a range of important NLP tasks including language modeling.
\cite{fan_advancing_2024} presents a block-diagonal selective SSM which achieves perfect length generalization on three out of the four regular language tasks from~\cite{deletang_neural_2023}.
Compared to previous work, we are the first to demonstrate that all finite-state automata from~\cite{deletang_neural_2023}, and others from~\cite{liu_transformers_2023}, can be emulated with single layer selective SSM utilizing a linear readout. We additionally provide experimental results with various single- and multi-layer complex-valued diagonal SSMs on FSA emulation.

\subsection{Formal Analysis of Sequence Models}

The ability of neural networks to model various formal models of computation is a long-standing area of research~\cite{siegelmann_computational_1995, minsky_computation_1967}.
One of the first models studied was the RNN, which can implement the dynamics of any FSA, with \cite{svete2023efficient} reviewing three different exact mappings of FSA to RNNs.
The presented mappings are due to~\cite{minsky_thesis_1954, dewdney_rnn_1977, indyk_optimal_1995}.
Recently, many such studies of the Transformer model have emerged. A survey of various bounds on the Transformer's expressiveness can be found in \cite{strobl_what_2024}. Particularly interesting is the study due to~\cite{merrill_parallelism_2023}, which conjectures that any model architecture as parallelizable as the Transformer will obey limitations similar to it. 
\cite{deletang_neural_2023} presents experimental study of the Transformer~\cite{vaswani_attention_2017}, RNN~\cite{elman_finding_1990}, LSTM~\cite{hochreiter_long_1997}, Stack-RNN \cite{mikolov_stack_2015}, Tape-RNN~\cite{suzgun_memory_2019} and other architectures in formal language transduction. 
We extend their analysis by considering SSM-based architectures.
\cite{merrill_illusion_2024} derives a bound on diagonal selective SSMs with logarithmic precision representation, placing them in the $TC^0$ circuit complexity class. This complexity class encompasses the presented $C_n \times C_m$ and $D_n$ groups. Their experimental results do not evaluate the length-generalization aspect of SSMs and selective SSMs.
\cite{sarrof2024expressivecapacitystatespace} show that a stack of complex diagonal SSM layers can emulate any automaton in $TC^0$. Their experimental results only evaluate Mamba and the Transformer, and the commutativeness of the automata is not a central aspect of their work.

\section{Conclusion}

In this work, motivated by the inability of a wide range of sequence models to emulate arbitrary automata, we have presented \name. 
It utilizes a dictionary of \emph{dense} transition matrices, combined at each time step using a softmax selection mechanism and operator normalization, and a readout which consists of layer normalization followed by a linear map.
\name is the first selective state-space model to achieve perfect length generalization on a diverse set of FSA emulation tasks using a single layer.

We then evaluated more efficient selective SSMs with diagonal complex valued transition matrices on a set of FSA emulation tasks. 
We observed that they exhibit significantly worse length generalization than their dense counterparts.
We probed deeper into this result by investigating their performance on two similar automata which differ in one crucial property: commutativity with respect to the inputs.
Our experimental analysis confirms that diagonal selective SSMs exhibit a significantly higher degree of length generalization on the commutative automaton compared to the non-commutative one.
We explain the results by drawing a connection between a general mapping of FSA dynamics onto selective SSM weights and linear system theory.
Assuming that the selective SSMs do not implement an unintuitive mapping of FSA dynamics, we observe that they indeed cannot model non-commutative automata.

We list some potential avenues for future work. Firstly, \name's softmax selection mechanism allows the use of temperature scaling and annealing strategies, which could lead to more interpretable and efficient models. Secondly, general mappings of $TC^0$ non-commutative automata to diagonal selective SSMs can be investigated further. Finally, the model could be evaluated on more natural data to reveal whether the increased formal expressiveness translates to other real-world applications.

\section*{Acknowledgments}
We would like to thank Nicolas Menet for providing inputs on the theoretical discussion of this work.

\bibliography{sequence-modelling}

\clearpage
\appendix
\renewcommand\thefigure{\thesection.\arabic{figure}}
\renewcommand\thetable{\thesection.\arabic{table}}   

\section{Task Description}
\label{apx:tasks}

\subsection{Parity}

\emph{Parity} is one of the simplest regular languages with $Q=\{\text{Even},\text{Odd}\}$ and $\Sigma=\{0,1\}$. Given a sequence of binary inputs, the corresponding output is \emph{Even} if the number of ones in the sequence is even, otherwise it is \emph{Odd}. The automaton starts in state $q_{\text{init}}=\text{Even}$.  

\subsection{Even Pairs}

The \emph{Even Pairs} task consists of determining whether the number of $01$ and $10$ substrings in a longer binary string is equal. For example, given a string 0101, it contains two 01 substrings and one 10 substring. As the number of these substrings is not equal, this string is not part of the Even Pairs regular language. On the other hand, 01000 contains one 01 substring and one 10 substring, meaning that this string is part of the language. This task effectively reduces to detecting whether the start and end of the string are equal symbols, which might explain the relatively high accuracy that certain models obtain in this task.

\subsection{Cycle}

Originally called \emph{Cycle Navigation}, this is an FSA consisting of five enumerated states $Q=\{1,2,3,4,5\}$ and the input alphabet $\Sigma=\{L,R,S\}$, standing for \emph{Left}, \emph{Right}, and \emph{Stay}. It is illustrated in Figure~\ref{fig:cycle}.

\begin{figure}[h!]
 \centering
 \includegraphics[width=0.4\linewidth]{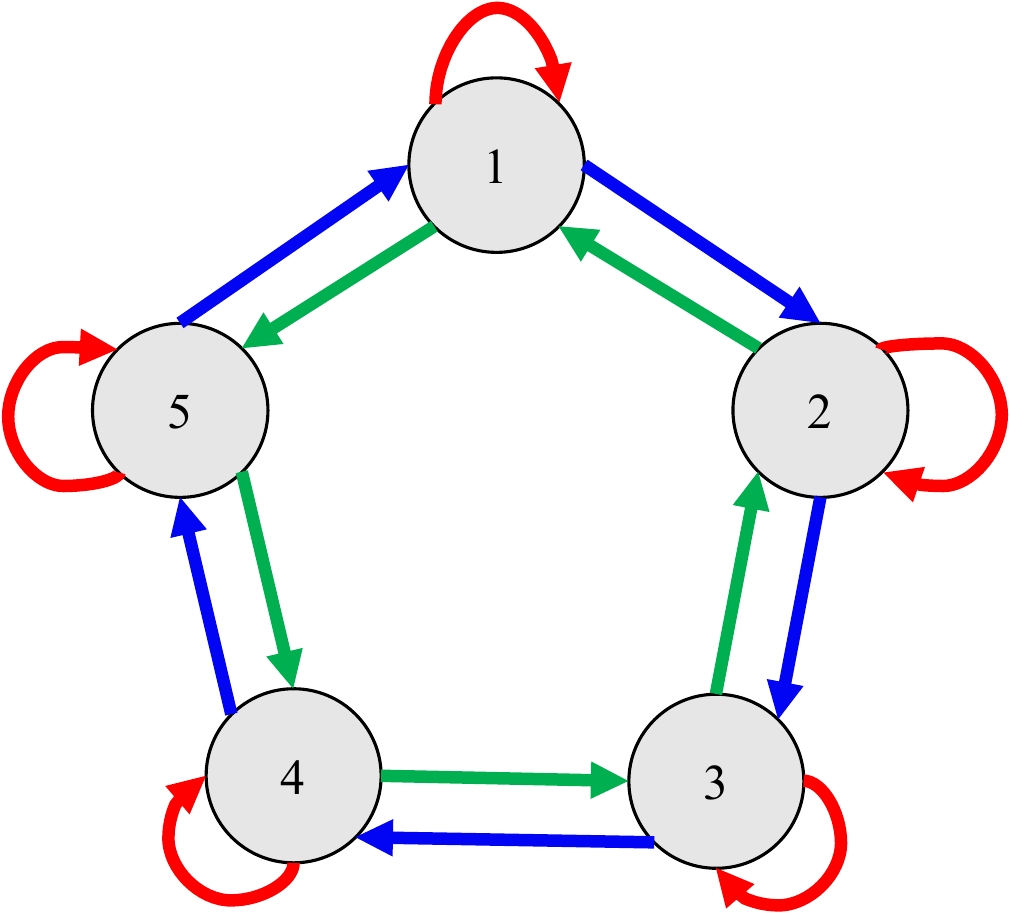}
 \caption{The \emph{Cycle} automaton starts in state $q_{\text{init}}=1$. The state graph is traversed using three actions, \emph{Left}, \emph{Right}, and \emph{Stay}. In the given sketch, these three actions respectively correspond to the green, blue, and red arrows.}
 \label{fig:cycle}
 \end{figure}

\subsection{Arithmetic}

The inputs of the \emph{Arithmetic} task are digits between 0 and $4$ interleaved with the operations $+, -$ and $*$. The task is to compute the result modulo 5. For example, 2 * 4 + 1 - 2 evaluates to 2. Obtaining the final state of the automaton thus reduces to evaluating an expression in modular arithmetic.

\subsection{Direct Product of Cyclic Groups $C_2$ and $C_n$}

Having reviewed the definitions of the FSAs from~\cite{deletang_neural_2023}, we now explain the set of FSAs derived from the structure of different algebraic groups. As already mentioned, these examples were generated using the code from~\cite{liu_transformers_2023}.

The direct product of the cyclic groups $C_2$ and $C_n$, denoted as $C_2 \times C_n$, is a solvable, commutative group~\cite{carter_visual_2009}. Recent results show that automata with solvable transformation groups can be emulated by diagonal selective SSMs~\cite{merrill_illusion_2024, sarrof2024expressivecapacitystatespace}. On the contrary, automata with nonsolvable transformation groups cannot be emulated by diagonal selective SSMs~\cite{merrill_illusion_2024}.

The Cayley diagram of the $C_2 \times C_5$ group is shown in Figure~\ref{fig:c25}.

\begin{figure}[h!]
 \centering
 \includegraphics[width=0.4\linewidth]{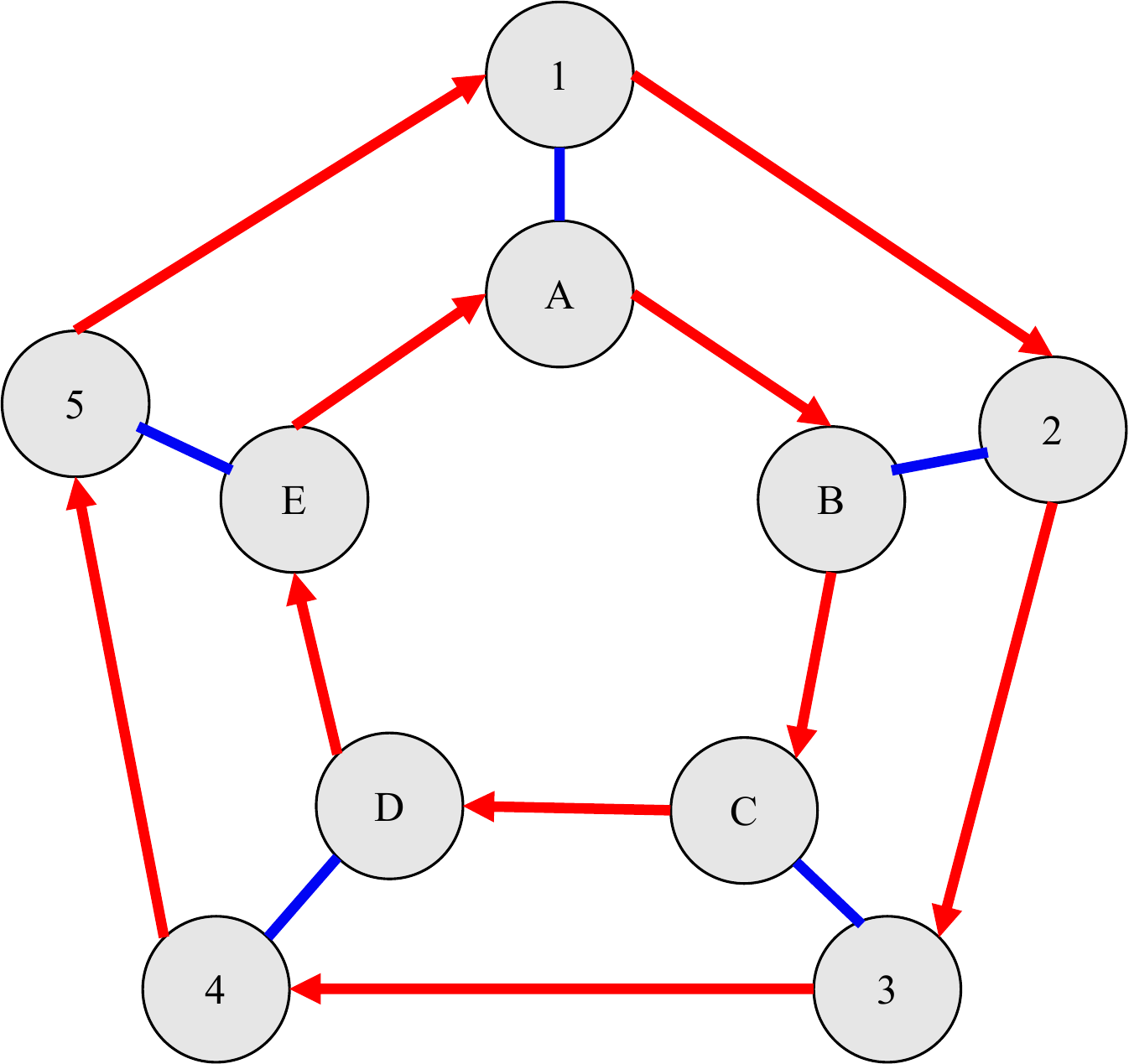}
 \caption{The $C_2 \times C_n$ group, visualized as a Cayley diagram~\cite{carter_visual_2009} with $n=5$. There are two actions, \emph{Move} on current cycle and \emph{toggle} between the two concentric cycles. Starting at any state, applying \emph{toggle} followed by \emph{move} results in the same final state as when applying \emph{move} followed by \emph{toggle}.}
 \label{fig:c25}
 \end{figure}

\subsection{Dihedral Group $D_n$}

The $D_n$ group consists of all symmetries of a regular $n$-gon~\cite{carter_visual_2009}. When visualized, its structure looks very similar to the $C_2 \times C_n$ group, with the exception that the two cycles now point in opposite directions.
This group is also solvable, but is crucially not commutative.

\begin{figure}[h!]
 \centering
 \includegraphics[width=0.4\linewidth]{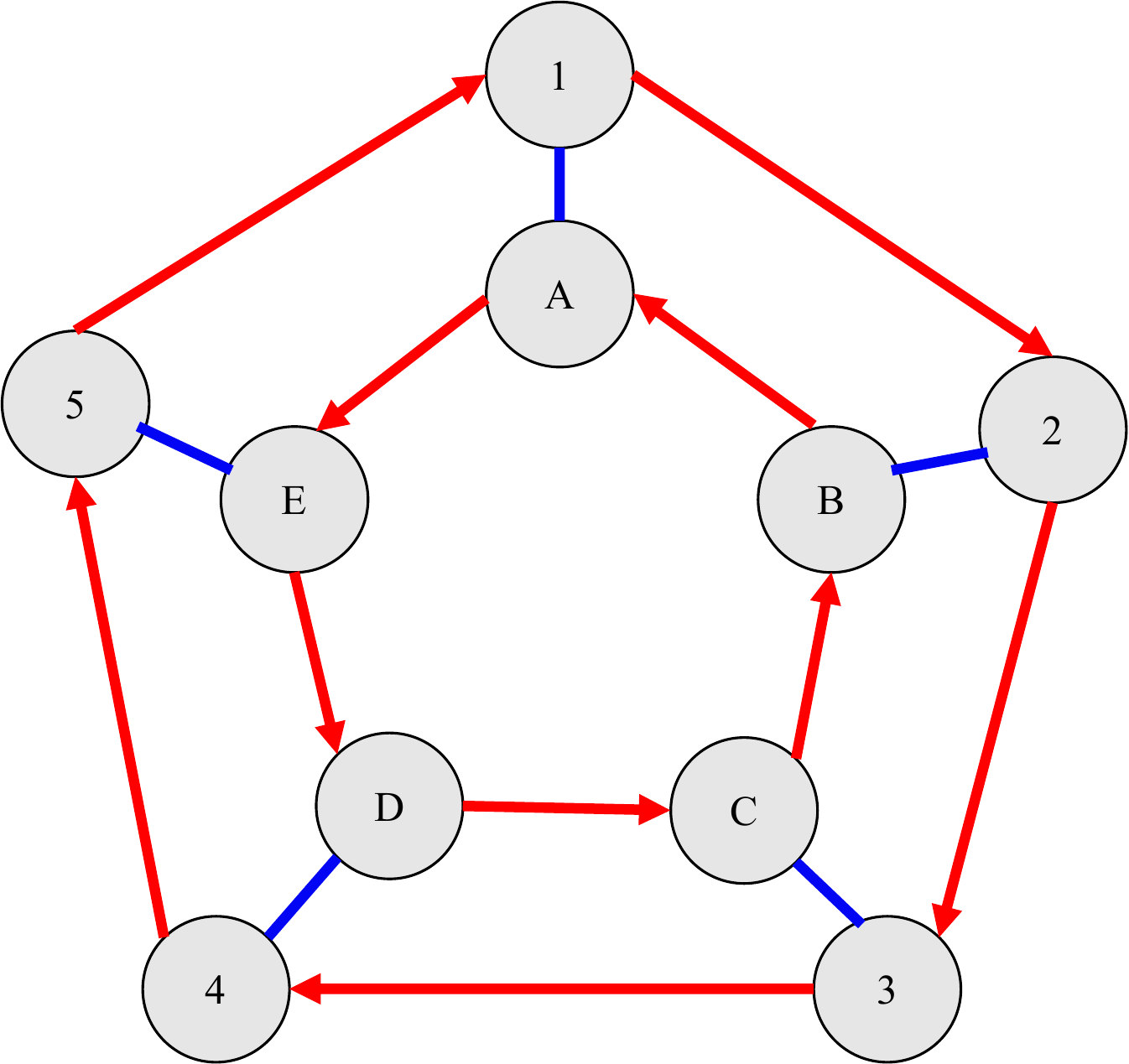}
 \caption{The $D_n$ automaton, visualized with $n=5$. There are two actions, \emph{Move} on current cycle and \emph{toggle} between the two concentric cycles. The FSA is not commutative. Starting at state 1, applying \emph{move} followed by \emph{toggle} results in state \emph{B}. However, starting from state 1, applying \emph{toggle} followed by \emph{move} results in state \emph{E}.}
 \label{fig:d5}
 \end{figure}

\subsection{Alternating Group $A_5$}

The $A_5$ group consists of all even permutations of a set of five elements, resulting in 60 states. Even permutations are those that can be obtained by applying an even number of transpositions, i.e., pairwise swaps of elements.
If we imagine five objects enumerated with numbers 0 to 4, the initial state can be represented by $(0,1,2,3,4)$. Starting from this state, every other state of the $A_5$ automaton can be reached by applying two actions: \emph{Swap} and \emph{Cycle}. \emph{Swap} swaps the positions of the first two pairs elements in the current state. For example, $(0,1,2,3,4)\xrightarrow[]{\text{swap}} (1,0,3,2,4)$. \emph{Cycle} cyclically shifts the elements to the right: $(0,1,2,3,4)\xrightarrow[]{\text{cycle}} (4,0,1,2,3)$. The permutations are enumerated according to the Python library sympy.combinatorics.permutations.
$A_5$ is the smallest nonsolvable group.
A visualization of the group can be found in~\cite{carter_visual_2009}.

\section{Experimental Setup}
\label{apx:experimental_setup}

Our experimental framework is based on the code by~\cite{deletang_neural_2023}. The code is modified to work with the PyTorch framework, and was extended by adding novel models and tasks. 
We used a modified version of the generative code from~\cite{liu_transformers_2023} to define the $C_2 \times C_n$, $D_n$, and $A_5$ FSAs.

\subsection{Experiment Hyperparameters}
Table~\ref{tab:tab_1_mamba_hyper} through Table~\ref{tab:tab_1_sdssm_hyper} show the hyperparameters required to reproduce our main experimental results. All models were trained using the Adam optimizer with $\beta_1=0.9$, $\beta_2=0.999$, batch size 128, and 1,000,000 training steps. For S4, S4D, Hyena, and Mamba results, we sweeped the learning rate in \{1e-5, 1e-4, 1e-3, 1e-2\}, weight decay in \{0, 1e-4, 1e-3, 1e-2\} and state size in \{128, 256\}.
\name, the $\mathbb{C}$ diagonal models, the RNN, and the LSTM were always trained without regularization.
Unless explicitly stated otherwise, the state size is 64.

\begin{table}[h!]
\centering
\begin{tabular}{|l|l|l|l|l|}
\hline
  & State & LR & WD \\ \hline
Parity Check & 128 & 1e-4 & 0.0\\ \hline
Even Pairs & 128 & 1e-2 & 0.0 \\ \hline
Cycle Navigation & 256 & 1e-4  & 0.0 \\ \hline
Modular Arithmetic & 256 & 1e-3 & 0.0 \\ \hline
\end{tabular}
\caption{Mamba hyperparameters for Table 1. The embedding size is 64.}
\label{tab:tab_1_mamba_hyper}
\end{table}

\begin{table}[h!]
\centering
\begin{tabular}{|l|l|l|l|}
\hline
 Task & LR & WD  \\ \hline
Parity Check  & 1e-4 & 0.0  \\ \hline
Even Pairs  & 2e-4 & 0.0 \\ \hline
Cycle Navigation   & 1e-4 & 0.0 \\ \hline
Modular Arithmetic   & 2e-4 & 1e-4  \\ \hline
$C_2 \times C_4$  & 5e-4 & 1e-4  \\ \hline
$D_4$ & 1e-4 & 1e-4   \\ \hline
$A_5$ & 1e-4 & 1e-4    \\ \hline
\end{tabular}
\caption{RegularLRNN~\cite{fan_advancing_2024} hyperparameters for Table 1. The embedding size and the state size are both 64. Block size is 8 in each case.}
\label{tab:tab_1_reglrnn_hyper}
\end{table}

\begin{table}[h!]
\centering
\begin{tabular}{|l|l|}
\hline
 & LR. \\ \hline
Parity Check & 5e-4 \\ \hline
Even Pairs & 1e-3 \\ \hline
Cycle Navigation & 5e-4 \\ \hline
Modular Arithmetic & 1e-4 \\ \hline
$C_2 \times C_4$ & 5e-3 \\ \hline
$D_4$ & 1e-4 \\ \hline
$A_5$ & 1e-3 \\ \hline
\end{tabular}
\caption{Learning rates for $\mathbb{C}$ Diagonal in Table 1. The state size is 64.}
\label{tab:tab_1_cdiag_hyper}
\end{table}

\begin{table}[h!]
\centering
\begin{tabular}{|l|l|l|l|l|}
\hline
 Task  & \emph{k} & LR  &  $l_p$ \\ \hline
Parity Check  & 8 & 1e-4  & 1.2 \\ \hline
Even Pairs   & 8 & 2e-5 & 1.4   \\ \hline
Cycle Navigation  & 8 & 2e-5 & 1.3   \\ \hline
Modular Arithmetic &  18  & 1e-4 & 1.2 \\ \hline
$C_2 \times C_4$ & 6 & 2e-5 & 1.3 \\ \hline
$D_4$  & 6 &  1e-4  & 1.2 \\ \hline
$A_5$  &  6  & 2e-5 & 1.3 \\ \hline
\end{tabular}
\caption{SD-SSM hyperparameters for Table 1. We denote the number of transition matrices $(A_1,...,A_k)$ with \emph{k}.}
\label{tab:tab_1_sdssm_hyper}
\end{table}

\begin{table}[h!]
\centering
\begin{tabular}{|l|l|l|l|}
\hline
Length & RNN & LSTM & \name   \\ \hline
 4 & 0.001 & 0.0025 & 0.0025  \\ \hline
 5 & 0.001  & 0.0025 & 0.01 \\ \hline
 6 & 0.001  & 0.0025 & 0.0075 \\ \hline
 7 & 0.0025  & 0.0025 & 0.001 \\ \hline
 8 & 0.0025  & 0.0025 & 0.0001 \\ \hline
\end{tabular}
\caption{Learning rates for RNN, LSTM and \name results in Table~\ref{tab:short-training-sequency-A5}. For \name, we used $k=6$ transition matrices, state size 64, and $l_{1.2}$ column norm.}
\label{tab:C230_D30_SDSSM}
\end{table}

\begin{table}[h!]
\centering
\begin{tabular}{|l|l|l|l|}
\hline
Task & $B=0$? & Readout & LR \\ \hline
\multirow{4}{*}{$C_2 \times C_{30}$} & \multirow{2}{*}{Yes} & Linear & 1e-3 \\ \cline{3-4} 
 &  & Nonlin. & 1e-3 \\ \cline{2-4} 
 & \multirow{2}{*}{No} & Linear & 1e-3 \\ \cline{3-4} 
 &  & Nonlin. & 1e-2 \\ \hline
\multirow{4}{*}{$D_{30}$} & \multirow{2}{*}{Yes} & Linear & 1e-4 \\ \cline{3-4} 
 &  & Nonlin. & 1e-4 \\ \cline{2-4} 
 & \multirow{2}{*}{No} & Linear & 5e-4 \\ \cline{3-4} 
 &  & Nonlin. & 5e-3 \\ \hline
\end{tabular}
\caption{Leraning rates for $\mathbb{C}$ diagonal results in Table~\ref{tab:C2xC30_vs_D30}. The state size is 64.}
\label{tab:C230_D30_complex}
\end{table}

\begin{table}[h!]
\centering
\begin{tabular}{|l|l|l|l|}
\hline
 Task & LR & $l_p$  \\ \hline
$C_2 \times C_{30}$ & 5e-5 & 1.1   \\ \hline
$D_{30}$ & 5e-5 & 1.15    \\ \hline
\end{tabular}
\caption{Hyperparameters for \name results in Table~\ref{tab:C2xC30_vs_D30}. We used $k=10$ transition matrices and state size 64.}
\label{tab:C230_D30_SDSSM}
\end{table}

\begin{table}[h!]
\centering
\begin{tabular}{|l|l|l|l|l|}
\hline
 Task                   & 1 Layer &2 Layers & 3 Layers & 4 Layers  \\ \hline
  $C_2 \times C_{30}$  & 1e-2 & 5e-4 & 5e-4 & 5e-4 \\ \hline
  $D_{30}$  & 5e-3 & 5e-4 & 1e-3 & 1e-4   \\ \hline

\end{tabular}
\caption{Learning rates for $\mathbb{C}$ diagonal results in Table~\ref{tab:c_multi}. The state size is 64. The model uses an MLP readout with intermediate size 128, and $B\neq 0$.}
\label{tab:C230_D30_SDSSM}
\end{table}

\subsection{Details on the Length Efficiency Analysis}

The experimental setup in the \emph{Length Efficiency Analysis} in Sec.~\ref{sec:SDSSM} differs from the experimental setup used in the rest of the experiments. As the input lengths are very short, in order for input-output examples to cover all states of the automaton, we chose the initial state uniformly at random. This stands in contrast with the other experiments, which always used a fixed initial state. 
The set of states $Q$ was encoded into a matrix $X \in \mathbb{R}^{|Q|\times d}$ with $d=512$ and each element of the matrix $X$ randomly generated i.i.d. as $X_{i,j} \sim \mathcal{N}(0,1/\sqrt d)$. This ensures that each row of $X$ is in expectation of unit $l_2$ norm and that the rows are highly likely to be orthogonal to each other.
The initial state is chosen uniformly at random, and is projected to a lower dimensionality (128 in our case) using a trainable linear projection. The state is provided as the initial state of the model, $x_0$, after which the model is emulated for a short number of steps by consuming a randomly generated input sequence. The final state of the model, $x_T$, is projected to 512 dimensions using a linear layer $M$, and the matrix-vector product $X(Mx_T)$ defines the output logits of the model.
The model is trained to minimize the cross-entropy loss between the output logits and the true final state of the automaton. 
As the training sequences are very short, we observed overfitting. For this reason, we validate the models on sequences up to length 40 and report the accuracy obtained on sequences up to length 500.
This stands in contrast with the experimental setup used for all other experiments, in which we evaluate the final model after a fixed number of training steps as in~\cite{deletang_neural_2023}.

\section{Further Results}
\label{apx:further_results}

\subsection{Modern Sequence Models Fail to Emulate FSA With Two Layers}

We report results with two layers of S4, S4D, Hyena, H3, and Mamba on the set of regular language tasks from~\cite{deletang_neural_2023} in Table~\ref{tab:2_layer_best}. While the results are often better with two layers, the models still do not exhibit significant length generalization on this set of tasks.

\begin{table*}[t!]
\centering
\begin{tabular}{ccccccccc}
\toprule
Layers & Task        & S4    & S4D   &   H3      & Hyena     &     Mamba & RegularLRNN & \name \\    \cmidrule(r){1-1} \cmidrule(r){2-2} \cmidrule(r){3-3} \cmidrule(r){4-4} \cmidrule(r){5-5} \cmidrule(r){6-6} \cmidrule(r){7-7} \cmidrule(r){8-8} \cmidrule(r){9-9}
  \multirow{4}{*}{1} &  Parity      & ---  & 50.1 / 50.0 & 50.0 / 50.0      &  50.1 / 50.0    &  50.3 / 50.1   &   100\space/\space 100      & 100\space/\space 100 \\
 &  Even Pairs                      & ---   & 50.4 / 50.3 & 51.0 / 50.5   &  99.9 / 79.3   &  100\space/\space 100    &   100\space/\space 100      & 100\space/\space 100  \\
  &  Cycle                          & ---   &  33.6 / 29.2  & 20.1  / 20.0 & 20.1 / 20.0   & 21.1 / 21.0   &    100\space/\space 100     & 100\space/\space 100  \\
 &  Arithmetic                      & --- & 20.1 / 20.0 & 20.1 / 20.1   & 20.1   / 20.1    & 20.1 / 20.1   &      33.3 / 30.2   & 99.9 / 98.5 \\ 
 \cmidrule(r){1-1} \cmidrule(r){2-2} \cmidrule(r){3-9}
   \multirow{4}{*}{2} &  Parity      & 50.0 / 50.0  & 50.1 / 49.9 & 50.1 / 50.0      &  50.1 / 50.0    &  55.6 / 53.8   &  ---       & --- \\
 &  Even Pairs                      & ---   &       67.2 / 60.1  & 63.8  /  59.8   &  76.2  / 67.6   &  100 / 100    &  ---       & ---  \\
  &  Cycle                           & ---   &       50.3 / 50.1   & 20.0  / 19.9 & 20.2   / 19.9   & 59.5 / 40.9   &   ---      & --- \\
 &  Arithmetic                      & 20.1 / 20.1 & 21.5 / 20.5 & 20.2  / 20.0    & 20.2   / 20.0    & 23.5 / 23.0   &  ---      & --- \\
 
 \bottomrule       
\end{tabular}
\caption{Maximum/Average length generalization accuracy (\%) on sequences up to length 500 over 3 random seeds. Various sequence models with 1 and 2 layers. The models were trained on sequences up to length 40.}
\label{tab:2_layer_best}
\end{table*}

% \begin{table}[t]
% \centering
% \begin{tabular}{lcccccc}
% \toprule
% Task        & S4    & S4D   &   H3      & Hyena     &     Mamba & \name   \\ \cmidrule(r){1-1} \cmidrule(r){2-2} \cmidrule(r){3-3} \cmidrule(r){4-4} \cmidrule(r){5-5} \cmidrule(r){6-6} \cmidrule(r){7-7}
% Parity      & 50.0  & 50.1  & 50.1      &  50.1     &  55.6   &         \\
% Even Pairs  & ---   & 67.2  & 63.8      &  76.2     &  100    &         \\
% Cycle       & ---   & 50.3   & 20.0      & 20.2      & 59.5   &         \\
% Arithmetic  & 20.1  & 21.5  & 20.2      & 20.2       & 23.5   &         \\ \bottomrule       
% \end{tabular}
% \caption{Sequence models with two layers. Best length generalization accuracy (\%) over 3 random seeds.}
% \label{tab:2_layer_best}
% \end{table}

% \begin{table}[t]
% \centering
% \begin{tabular}{lccccc}
% \toprule
% Task        & S4  & S4D &        H3      & Hyena    &     Mamba    \\ \cmidrule(r){1-1} \cmidrule(r){2-2} \cmidrule(r){3-3} \cmidrule(r){4-4} \cmidrule(r){5-5} \cmidrule(r){6-6} 
% Parity      & 50.0 & 49.9 & 50.0   &  50.0     & 53.8             \\
% Even Pairs  & --- & 60.1 & 59.8 &  67.6 &  98.5           \\
% Cycle       & --- & 50.1 & 19.9   & 19.9 & 40.9                    \\
% Arithmetic  & 20.1 & 20.5 & 20.0  & 20.0   & --- \\ \bottomrule       
% \end{tabular}
% \caption{Sequence models with two layers. Average length generalization accuracy (\%) over 3 random seeds.}
% \label{tab:2_layer_avg}
% \end{table}

\subsection{SD-SSM with Nonlinear Readout on Arithmetic}

The \name with a nonlinear readout replaces the linear map in Figure~\ref{fig:model_sketch} with a two-layer MLP with the ReLU activation function. The intermediate size of the MLP is equal to the state size (64). We ran an extensive hyperparameter search over different grids before concluding that the MLP readout negatively impacts single-layer \name's performance on Arithmetic. 
%Most searches were run with 3 seeds and the column normalization factor in \{1.1, 1.2, 1.3\}.

%
We initially started a hyperparameter search with 6 transition matrices, learning rate in \{1e-4, 5e-4\} and weight decay in \{0, 1e-4, 1e-3, 1e-2\}. The best result of 63.9\%  was obtained with lr=1e-4 and wd=1e-3. Notably, increasing the weight decay factor past 1e-3 resulted in a drastic performance drop to 20.6\%. Without weight decay, the best result we could obtain was 29.2\%.
In the second hyperparameter search, we ran experiments with the number of transition matrices in \{18, 36\}, learning rate in \{2e-5, 1e-4, 5e-4\} and weight decay in \{0, 1e-4, 1e-3\}. The best result (71.9\%) was obtained with $k=18$ transition matrices, lr = 1e-4 and wd = 1e-3. Increasing the weight decay further, to 1e-2, again resulted in a drastic performance drop to 21.5\%.
In the third search, we increased the learning rates and used $k=10$ transition matrices. Concretely, the learning rate was set to values in \{0.001, 0.0025, 0.005\} and weight decay was set to values in \{1e-5, 1e-4 and 1e-3\}. In this search, $p$ was in \{1.15, 1.25, 1.3\}. The best result of 67.3\% was achieved in the centre of the grid, with lr=0.0025 and wd=1e-4.
Finally, we experimented with dropout on the hidden layer of the MLP readout. The hyperparameter search was run with 18 transition matrices, learning rate in \{2e-5, 1e-4, 5e-4\} and dropout in \{0.1, 0.2, 0.5\}, and resulted in the best accuracy of 57.3\% with lr=5e-4 and dropout=0.5

\subsection{Average Accuracies in the Length Efficiency Analysis}

In the \emph{Length Efficiency Analysis} in Sec.~\ref{sec:SDSSM}, we reported the best accuracy achieved over three random seeds in Table~\ref{tab:short-training-sequency-A5}. Here, we additionally report the average accuracy over three seeds in Table~\ref{tab:short-training-sequences-avg}. While in Table~\ref{tab:short-training-sequency-A5} we could observe that \name exhibits better length generalization when the best seed is selected, here we can see that it does exhibit higher variability across seeds compared to the the other models.

\begin{table*}[]
\centering
\begin{tabular}{cccccc}
\toprule
       & \multicolumn{5}{c}{Training Length}    \\  \cmidrule(r){2-6}
Model  &    4       &    5      &   6        & 7          & 8             \\ \cmidrule(r){1-1} \cmidrule(r){2-6}
RNN    &   7.1 / 6.3   &  26.1 / 21.5   & 85.4 / 78.3  &   99.4 / 98.3   &   99.9 / 99.5  \\ \cmidrule(r){1-1}
LSTM   &    14.7 / 11.1   &  66.9 / 45.7   &  97.6 / 95.4   &  99.9 / 99.9  &  100\space/\space 99.9    \\ \cmidrule(r){1-1}
SD-SSM (ours) &  31.5 / 15.0  &   83.3 / 41.3  &  97.2 / 92.1  &  99.4 / 95.0  &  100\space/\space 100 \\ 
\bottomrule          
\end{tabular}
\caption{Maximum/Average length generalization accuracy (\%) on sequences up to length 500 over three random seeds. The models were trained to emulate the $A_5$ automaton with very short sequences (4 to 8) using a state size of 128. \name used $k=6$ transition matrices.}
\label{tab:short-training-sequences-avg}
\end{table*}

\subsection{Results with Multiple Layers of the $\mathbb{C}$ Diagonal Model}

Finally, we report the best and average length generalization accuracy of the $\mathbb{C}$ diagonal model with 2, 3, and 4 layers on the $C_2 \times C_{30}$ and $D_{30}$ automata in Table~\ref{tab:c_multi}. In both tasks, with every number of layers investigated, 100\% in-domain accuracy could be achieved. However, we can see that increasing the number of layers beyond one tends to have a detrimental effect on the length generalization of the models.

\begin{table*}[t]

\centering
\begin{tabular}{lccccc}
\toprule
 & \multicolumn{4}{c}{Layers}    \\  \cmidrule(r){2-5}
Task        & 1 & 2 &  3    & 4    \\ \cmidrule(r){1-1} \cmidrule(r){2-2} \cmidrule(r){3-3} \cmidrule(r){4-4} \cmidrule(r){5-5} 
$C_2 \times C_{30}$  &  81.7 / 35.4  & 24.6 / 23.9 & 29.4 / 27.7   &    30.2 / 28.8     \\ \cmidrule(r){1-1}
$D_{30}$  & 61.0 / 28.6 &  29.2 / 28.4 & 24.7 / 20.3  &  25.9 / 23.9        \\\bottomrule       
\end{tabular}

\caption{Maximum/Average accuracy (\%)  on sequences up to length 600 over three random seeds of the $\mathbb{C}$ diagonal model with $B\neq 0$ and the nonlinear readout, with different model depth. The models were trained on sequences up to length 90.}
\label{tab:c_multi}
\end{table*}

\end{document}